

\documentclass[preprint,12pt]{elsarticle} 




\usepackage{amssymb}

\usepackage{amsmath,amssymb,amsthm}
\usepackage{latexsym}

\usepackage[utf8]{inputenc}
\usepackage{mathtools,mathrsfs,amsfonts,amssymb,amsthm,amsmath,tikz,bbm}
\usepackage{graphics}
\usepackage{multirow}
\usepackage{pgfplots}
\usepackage{subfig}
\usepackage{romannum}
\usepackage{caption}
\captionsetup{format=hang}

\usepackage{titlesec}

\titleformat{\subsection}
{\normalfont\normalsize}{\thesubsection}{1em}{}
\titleformat{\subsubsection}
{\normalfont\normalsize}{\thesubsubsection}{1em}{}

\newcommand{\comment}[1]{}

\setlength{\parskip}{0.5em}
\pgfplotsset{width=8cm,compat=1.9}
\newcommand{\bx}{{\bf x}}
\newcommand{\barbx}{{\bar{\bf x}}}
\newcommand{\by}{{\bf y}}
\newcommand{\barby}{{\bar{\bf y}}}

\newcommand{\bb}{{\bf b}}
\newcommand{\bu}{{\bf u}}
\newcommand{\bh}{{\bf h}}

\newcommand{\A}{\mathcal{A}}

\newcommand{\G}{\mathcal{G}}
\newcommand{\HG}{\mathcal{H}}
\newcommand{\NeigH}{\mathcal{S}}

\newcommand{\T}{\mathcal{T}}
\newcommand{\Trans}{\mathscr{T}}
\newcommand{\R}{\mathbb{R}}

\newcommand{\Neig}{\mathcal{N}}
\newcommand{\ExtNeig}{\widetilde{\mathcal{N}}}
\newcommand{\X}{\mathcal{X}}

\newcommand{\Id}{\text{Id}}
\newcommand{\I}{\mathcal{I}}

\newcommand{\Win}{W^{in}}
\newcommand{\Wout}{W^{out}}

\newtheorem{thm}{Theorem}[section]

\newtheorem{prop}[thm]{Proposition}
\theoremstyle{definition}
\newtheorem{defn}[thm]{Definition}

\theoremstyle{remark}

\usepackage{bbm}


\journal{Neural Networks}

\begin{document} \pagenumbering{arabic}

\begin{frontmatter}



\title{Hypergraph Echo State Network}


\author[1]{Justin Lien}

\affiliation[1]{organization={Mathematical Institute, Graduate School of Sciences, Tohoku University}, 
            addressline={6-3 Aramaki, Aoba Ward}, 
            city={Sendai},
            postcode={980-7}, 
            state={Miyagi},
            country={Japan}}
\begin{abstract}
A hypergraph as a generalization of graphs records higher-order interactions among nodes, yields a more flexible network model, and allows non-linear features for a group of nodes. 
In this article, we propose a hypergraph echo state network (HypergraphESN) as a generalization of graph echo state network (GraphESN) designed for efficient processing of hypergraph-structured data, derive convergence conditions for the algorithm, and discuss its versatility in comparison to GraphESN. 
The numerical experiments on the binary classification tasks demonstrate that HypergraphESN exhibits comparable or superior accuracy performance to GraphESN for hypergraph-structured data, and accuracy increases if more higher-order interactions in a network are identified. 
\end{abstract}


\begin{highlights}
\item A novel HypergraphESN model is proposed to process higher-order interactions in a network.
\item Spectral conditions for the convergence of the HypergraphESN are derived.
\item The non-linear interactions between vertices and hyperedges are introduced in the HypergraphESN model through the incidence graph.
\item The HypergraphESN exhibits a comparable or superior performance compared to the classical GraphESN for hypergraph-structured data.
\item The numerical experiments of binary classification tasks imply that the accuracy improves as more higher-order relationships in a network are identified. 
\end{highlights}

\begin{keyword}
Recurrent Neural Networks \sep Reservoir computing \sep Echo State Networks \sep Hypergraphs


\end{keyword}

\end{frontmatter}


\section{Introduction} \label{Chap:Intro}
A network refers to a collection of interconnected elements that are linked together to facilitate communication, interaction, or the exchange of information. Graph representation of a network is widely used in a variety of application fields including social sciences, chemistry, transit planning, and epidemiology, where a vertex represents an individual, an entity, an atom, etc, and an edge describes the pairwise relationship between vertices \cite{CHUNG1997, Nowzari2016, Reiser2022, Zhang2016}. However, graph-based modeling is a simplification since it does not adequately represent the realistic case where multiple nodes share common information and interact with others. For example, a benzene ring is a six-carbon cyclic hydrocarbon with alternating single and double bonds between the carbon atoms. 
In a stochastic susceptible-infected-susceptible (SIS) setting for airborne diseases, a healthy individual may be infected when interacting with a group of susceptible individuals. A co-authorship network is a social network that represents collaborations among authors in the production of scholarly publications. On the contrary, the hypergraph exhibits the capacity to capture higher-order relationships among vertices, including intricate interactions between individual vertices and groups of vertices (i.e., hyperedges).
This distinctive feature has spurred the development of hypergraphs in both theory and applications \cite{Bretto2013,Konstantinova2001,Mulas2022}. 

In recent years, the development of machine learning algorithms has been also extended to the hypergraph domain. 
For instance, the hypergraph neural network, built upon the graph neural networks, and its variants have been proposed to handle the unique structural properties of hypergraphs and capture information from hyperedges to perform tasks such as node classification \cite{Feng2019, Jiang2019}. 
Inspired by the graph convolution networks, there have been efforts to develop hypergraph convolution operations to process intricate relationships among nodes \cite{Bandyopadhyay2020,Yadati2019,Yi2020}.
The hypergraph-based modelings potentially provide a more comprehensive understanding of complex relationships within the data, and offer researchers a novel tool for exploring intricate structures and dynamics that may be inadequately represented by traditional graph models \cite{Higham2021,Hirono2021,Jia2021,Zhou2007}.

An echo state network (ESN) \cite{Dai2009,Jaeger2001,Yu2019}, a type of recurrent neural network (RNN), is structured with three fundamental layers: an input layer for receiving external data, a hidden reservoir layer consisting of interconnected non-linear recursive nodes, and an output layer responsible for recording desired outputs. The distinguished feature of ESNs lies in their reservoir, which acts as a dynamic memory, enabling them to capture complex dependency, manage higher dimensional and nonlinear data, and create a rich representation of the input sequence. 
Unlike the traditional RNN, the reservoir is randomly and sparsely initialized and then remains untrained, and only the output layer requires training, leading to computational efficiency and effective performance even with limited training data. The distinctive architectural design of ESNs has led to their popularity and widespread recognition, particularly in applications such as time-series analysis, signal processing, and classification.

Echo state networks have been adapted to address specific challenges for a wide range of applications, giving rise to variants such as leaky echo state networks suitable for slow dynamic systems \cite{JAEGER2007,Lun2019} and echo state Gaussian processes designed for noisy data \cite{Chatzis2011,Huang2019}. 
In this article, we extend the GraphESN \cite{Gallicchio2010} to handle hypergraph-structured data, enabling efficient processing within this domain.
Unlike other types of ESNs, both GraphESN and HypergraphESN incorporate the input network structures for reservoir computing, enhancing their capabilities to handle and identify the complex dynamics within structured data. 
While GraphESN is designed for the linear pairwise relationship between vertices, we may introduce a non-linear bias on the vertex-hyperedge interaction in HypergraphESN before feeding it into the transition function.
Moreover, the convergence condition of the reservoir computing is guaranteed by the contractivity of the transition functions which are closely tied to the spectral properties of the input hypergraph structures.

The article is organized as follows. In section \ref{Chap:Pre}, we review the GraphESN based on \cite{Gallicchio2010} and the basic definition of hypergraphs. In section \ref{Chap:HESN}, we develop the HypergraphESN, as a generalization of GraphESN, provide conditions that guarantee the convergence of algorithms, and discuss the computational complexity. In section \ref{Chap:Experi}, we evaluate our HypergraphESN through numerical experiments on the binary classification tasks and compare it with GraphESN.

\section{Preliminary} \label{Chap:Pre}

In this article, by a network, we mean a collection $V$ of vertices with complicated relationships among them. Each vertex contain information with same types of attributes or features. In practice, graphs and recently, hypergraphs, are typically applied to model the relationships among nodes. Therefore, we start by reviewing the GraphESN and introducing the basic hypergraph theory.

\subsection{A Brief Review of the Graph Echo State Network}

A graph $\G = \big(V(\G),E(\G)\big)$ consists of the vertex set $V(\G)$ and the edge set $E(\G)$ where $e = (v,w) \in E(\G)$ is an un-ordered pair of vertices $v$, $w \in V(\G)$. The adjacency matrix $\A(\G) \in \R^{\left\vert V(G) \right\vert \times \left\vert V(G) \right\vert}$ is a matrix whose diagonal elements are zero and for non-diagonal elements, $\A(\G)_{vw} = 1$ if and only if $(v, w) \in E(\G)$. The neighborhood $\Neig(v)$ of a vertex $v \in V(\G)$ is the set of adjacent vertices, i.e., $\Neig(v) \coloneqq \{ w \in V(\G): (v,w) \in E(\G)\}$. The degree $\deg(v)$ of a vertex $v$ is defined by $\deg(v) \coloneqq \left\vert \Neig(v) \right\vert$. Each vertex $v$ is associated with a column vector $\bu(v) \in \R^{k}$ for some constant $k$ independent of vertices. We say that the column vector $\bu(\G)$, the concatenation of $\{\bu(v)\}_{v \in V(\G)}$, is structured by the graph $\G$ though it does not explicitly contain the information of the edge set. 
In this article, we only consider undirected simple graphs for simplicity (i.e., no self-loop or multi-edge), and column vectors if not explicitly stated. As a reminder, speaking of a graph, we usually mean the pair $\big(\G, \bu(\G)\big)$. Finally, $(\R^k)^\#$ denotes the set of graphs $\G$ such that $\{ \bu(v) \}_{v \in V(\G)} \subset \R^k$ \cite{Gallicchio2010} and with abuse of notation, we also write $\bu(\G) \in (\R^k)^\#$. 

In this study, we consider graph-level tasks with the supervised learning paradigm. The data is of the form $\{ (\G_j,\bu(\G_j),\by_j) \}$ where $\G_j \in (\R^{N_U})^\#$ and $\by_j \in \R^{N_O}$ with $N_U$, $N_O$ independent of $j$. For notational convenience, we drop the subscript $j$. Notice that the input vector $\bu(\G)$ is structured by $\G$ while $\by$ is merely a vector. 

A transduction $\Trans$ is a function from $(\R^k)^\#$ to $(\R^l)^\#$. As a vector can be viewed as a vector on the point graph $(\star,\emptyset)$, a function from $(\R^k)^\#$ to $\R^l$ is also a transduction. The ESN model can be described by a composition of transductions $\Trans = \Trans_{out} \circ \X \circ \Trans_{enc}$ where the encoding function $\Trans_{enc}: (\R^{N_U})^\# \to (\R^{N_R})^\#$ maps the input graph into the structured feature space in which reservoir computing is performed ($N_R$ is the number of units in the reservoir), the state mapping function $\X: (\R^{N_R})^\# \to \R^{N_R}$ computes a vector representation of the internal state, and the output function $\Trans_{out}: \R^{N_R} \to \R^{N_U}$ is a linear function that is the only trainable component.

The encoding function $\Trans_{enc}: \bu \mapsto \bx$ is defined vertex-wise according to the local transition function $\tau_v$ by
\begin{align*}
    \bx(v) &= \tau_v \big( \bu(v), \bx(\Neig(v)) \big) \\
    &= f\big( \Win_v \bu(v) + W_v \bx(\Neig(v)) \big),
\end{align*}
where $f$ is the activation function, the input weight matrix $\Win_v \in \R^{N_R \times N_U}$ is randomly generated, and the reservoir weight matrix $W_v \in \R^{N_R \times (\deg(v) \cdot N_R)}$ is randomly and \textit{sparsely} matrix, and $\bx(\Neig(v))$ is the concatenation of $\{ \bx(w) \}_{w \in \Neig(v)}$. In fact, the notation follows a more general scheme: for a subset $S \subset V(\G)$ and a vector $\bx(\G)$ structured by a graph, $\bx(S)$ denotes the vector obtained by stacking $\{\bx(v)\}_{v \in S}$; a similar notation is adopted for vectors labeled by (hyper-)edges. In practice, the activation function is typically chosen to be a sigmoid or a piecewise linear function like logistic and ReLU (rectified linear unit) activation functions. Throughout the mathematical analysis in this article, we only assume the $1$-Lipschitz continuity and piecewise differentiability of $f$, which covers most of the activation functions in applications. Finally, for a scalar function $f:\R \to \R$, by $f(\bx)$ for a vector $\bx$, we mean component-wise application of $f$. 

If we further adopt the stationary assumption, i.e., $\Win_v = \Win$ and $W_v = [W,\dots,W]$ with $W \in \R^{ N_R \times N_R} $ also called the reservoir weight matrix by abuse of terminology, then the encoding function of the GraphESN is determined by
\begin{align} \label{Eq:GloTrans}
   \bx(\G) &= \tau \big( \G, \bu(\G), \bx(\G) \big) \nonumber \\
   &= f\big( (\Id \otimes \Win) \bu(\G) + (\A(\G) \otimes W) \bx(\G) \big),
\end{align}
where $\tau$ is the (global) transition function.
The well-definedness of the encoding function is not guaranteed. However, if $\tau$ is a contraction, then by the Banach Contraction Principle, there exists a unique solution to the equation (\ref{Eq:GloTrans}). The previous studies of the contraction property either did not relate to the spectral properties of the graph or did assume the null input $\bu(\G) = 0$ (i.e., an autonomous system) \cite{Gallicchio2010, GALLICCHIO2023}. Here, by the explicit formula as above, the contraction property becomes merely a linear algebra question. A similar treatment can be found in \cite{Micheli2022}.

\begin{prop} \label{Prop:GraphESNContraction}
    With the same notation and setup as above, if 
    \[ \left\Vert \A(\G) \right\Vert \left\Vert W \right\Vert < 1, \]
    then $\tau$ is a contraction.
\end{prop}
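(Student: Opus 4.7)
The plan is to show that $\tau$, viewed as a function of the state variable $\bx(\G)$ with $\G$ and $\bu(\G)$ held fixed, has Lipschitz constant at most $\|\A(\G)\|\|W\|$ with respect to the Euclidean norm. Fixing the norm as the standard $2$-norm on $\R^{|V(\G)| N_R}$ is natural because $\|\cdot\|$ for the matrices $\A(\G)$ and $W$ is presumably the induced operator $2$-norm (spectral norm); this should be stated explicitly at the start.

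First, I would pick two state vectors $\bx_1(\G),\bx_2(\G) \in \R^{|V(\G)| N_R}$ and subtract $\tau(\G,\bu(\G),\bx_1(\G)) - \tau(\G,\bu(\G),\bx_2(\G))$. Because the input term $(\Id \otimes \Win)\bu(\G)$ does not depend on $\bx$, it drops out inside the argument of $f$. By the standing $1$-Lipschitz assumption on $f$ (applied component-wise), the norm of the difference is bounded by
\[
\bigl\| (\A(\G) \otimes W)\bigl(\bx_1(\G)-\bx_2(\G)\bigr) \bigr\|.
\]
This is the only place where the nonlinearity enters, and it reduces the problem to a purely linear-algebraic estimate.

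Next, I would invoke the standard fact that the spectral norm is multiplicative on Kronecker products, $\|\A(\G) \otimes W\| = \|\A(\G)\|\,\|W\|$. (This follows from the singular values of a Kronecker product being the products of the singular values of the factors, or directly by writing a test vector as a sum of tensor products and tracking norms.) Combining with the previous display gives
\[
\bigl\|\tau(\G,\bu(\G),\bx_1(\G)) - \tau(\G,\bu(\G),\bx_2(\G))\bigr\|
\leq \|\A(\G)\|\,\|W\|\cdot\|\bx_1(\G)-\bx_2(\G)\|,
\]
and the hypothesis $\|\A(\G)\|\|W\|<1$ yields contractivity.

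The only subtlety, and the main point to be careful with, is the choice of norm on the Kronecker product space and the justification of $\|\A(\G)\otimes W\|=\|\A(\G)\|\|W\|$; a brief singular-value argument (or a reference to a standard text) suffices. The cancellation of the $\bu(\G)$ term and the appeal to $1$-Lipschitz continuity of $f$ are routine, so no further obstacles are expected.
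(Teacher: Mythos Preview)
Your proposal is correct and follows essentially the same route as the paper's own proof: apply the $1$-Lipschitz property of $f$ to reduce to $\|(\A(\G)\otimes W)(\bx-\bx')\|$, then bound this by $\|\A(\G)\|\,\|W\|\,\|\bx-\bx'\|$. The only difference is that you spell out the Kronecker-product norm identity explicitly, whereas the paper leaves that step implicit.
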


\begin{proof}
    It follows from a direct computation.
    \begin{align*}
        &\left\Vert \tau \big( \G, \bu(\G), \bx(\G) \big) - \tau \big( \G, \bu(\G), \bx'(\G) \big) \right\Vert \\
    &\le \Vert f\big( (\Id \otimes \Win) \bu(\G) + (\A(\G) \otimes W) \bx(\G) \big) \\ 
    & \indent \indent \indent \indent \indent \indent - f\big( (\Id \otimes \Win) \bu(\G) + (\A(\G) \otimes W) \bx'(\G) \big) \Vert \\
    &\le \left\Vert (\A(\G) \otimes W) ( \bx(\G) - \bx'(\G) ) \right\Vert \\
    &\le \left\Vert \A(\G) \right\Vert  \left\Vert W \right\Vert \left\Vert \bx(\G) - \bx'(\G) \right\Vert.
    \end{align*}
\end{proof}

Hence, to ensure the validity of the contraction principle, the reservoir weight matrix $W$ should satisfy 
\[ ( \max \left\Vert \A(\G) \right\Vert ) \left\Vert W \right\Vert < 1 \]
where the maximum is taken over all input data. As pointed out in \cite{Micheli2022}, this condition is weaker than the one proposed in \cite{Gallicchio2010} as $\left\Vert \A(\G) \right\Vert \le \left\vert V(\G) \right\vert$ where the equality occurs if and only if the graph is complete. 

Because the number of vertices may vary from graph to graph, the state mapping function $\X: \R^{N_R \cdot \left\vert V(\G) \right\vert} \to \R^{N_R}$ is introduced to find a representation of the internal state $\bx(\G)$ (or graph $\G$). There are several choices of the state mapping function. One of them is the mean state mapping $\X_{MSM}$ given by
\[ \X_{MSM}: \bx(\G) \mapsto \barbx(\G) = \frac{1}{\left\vert V(\G) \right\vert} \sum_{v \in V(\G)} \bx(v). \]

Finally, the output function $\Trans_{out}: \R^{N_R} \to \R^{N_O}$ is a linear function with or without the bias term given by
\[ \barby(\G) = \Wout \barbx(\G) \]
or
\[ \barby(\G) = \Wout \barbx(\G) + \bb \]
where the readout weight matrix $\Wout \in \R^{N_O \times N_R}$ and the bias vector $\bb \in \R^{N_O}$ are trained by linear models (e.g., ridge regression, support vector machines). 

\subsection{An Introduction to Hypergraphs}

\begin{defn}
    A hypergraph $\HG$ consists of $V(\HG)$, the set of vertices, and $H(\HG)$, a collection of non-empty subsets of $V(\HG)$. Each $h \in H(\HG)$ is called a hyperedge. 
\end{defn}

\begin{defn}
    The degree of a vertex $v \in V(\HG)$ is the number of hyperedges containing $v$. That is, $\deg(v) = \left\vert \{h \in H: v \in h \} \right\vert$. The degree (or cardinality, the size) of a hyperedge $h \in H(\HG)$ is the number of vertices it contains; i.e., $\deg(h) \coloneqq \left\vert h \right\vert$. We usually informally say $h$ is a small (large) hyperedge if its degree is small (large, respectively).
\end{defn}

By the definition of degrees, we have 
\[ \left\vert V(\HG) \right\vert <\deg(v)> = \left\vert H(\HG) \right\vert <\deg(h)>, \]
where $< \cdot >$ denotes the average. 

As each hyperedge may contain multiple vertices, we define the notion of \textit{neighborhood} in a detailed manner.
\begin{defn}
    The open neighborhood $\Neig(v,h)$ of a vertex $v$ in a hyperedge $h$ containing $v$ is given by $\Neig(v,h) = \{ w \ne v \in h \}$. The closed neighborhood $\ExtNeig(v,h)$ of $v$ in a hyperedge $h$ containing $v$ is defined by $\ExtNeig(v,h) = \Neig(v,h) \cup \{ v \}$. For a vertex $v \in V(\HG)$, the set of hyperedges containing $v$ is given by $\NeigH(v) = \{ h \in H(\HG): \: v \in h \}$. The open neighborhood $\Neig(v)$ of $v$ is given by $\Neig(v) = \bigcup_{h \in \NeigH(v)} \Neig(v,h)$ and the closed neighborhood of $v$ is given by $\ExtNeig(v) = \Neig(v) \cup \{ v \}$. 
\end{defn}

The structure of the hypergraph $\HG$ can be encoded into the incidence matrix, which we now define.

\begin{defn}
The incidence matrix $\I \in \R^{\left\vert V(\HG) \right\vert \times \left\vert H(\HG) \right\vert}$ is given by
$$ \I_{vh} \coloneqq
\begin{cases}
1 & \text{ if } v \in h\\
0 & \text{otherwise.}
\end{cases} $$
\end{defn}

As in the graph theory, we define the degree matrix and adjacency matrix.

\begin{defn}
    The degree matrix $D \in \R^{\left\vert V(\HG) \right\vert \times \left\vert V(\HG) \right\vert}$ is a diagonal matrix given by
    $$ D_{vw} \coloneqq
    \begin{cases}
    \deg(v) & \text{ if } v=w\\
    0 & \text{otherwise.}
    \end{cases} $$
\end{defn}

\begin{defn}
    The adjacency matrix $A \in \R^{\left\vert V(\HG) \right\vert \times \left\vert V(\HG) \right\vert}$ is given by, for $v \ne w$,
    \[ A_{vw} \coloneqq \left\vert \{h \in H: \text{$v$, $w\in h$} \} \right\vert \]
    and $A_{vv} = 0$. That is, $A_{vw}$ is the number of hyperedges containing distinct $v$ and $w \in V$.
\end{defn}

By the definition of incidence, adjacency, and degree matrices, we have
\begin{align} \label{Eq:IncAdjDeg}
    \I\I^T = A + D.
\end{align} 

\begin{defn}
    The hypergraph $\HG$ is connected if for every pair of vertices $v$, $w$, there exists a path connecting $v$ and $w$. That is, there exist $v = v_1,...,v_k = w$ and $h_1,...,h_{k-1}$ such that $\{ v_i,v_{i+1}\} \subset h_i$ for each $i = 1,...,k-1$.
\end{defn}

There are many approaches to check the connectedness of a hypergraph. One of the easiest methods is the following characterization: the hypergraph is connected if and only if $A^{\left\vert V(\HG) \right\vert}$ does not have a non-zero entry.
In what follows, we always assume that the hypergraph $\HG$ is connected and does not have degenerate hyperedge, i.e., $\deg(h) = 1$, or multi-hyperedge, i.e., $h = h'$ for some distinct $h$, $h' \in H(\HG)$.

Finally, we provide a toy example of hypergraph $\HG = \big( V(\HG), H(\HG) \big)$ as shown in Figure \ref{Fig:1} where
\begin{align*}
    V(\HG) &= \{v_1,\dots,v_6\} \\
    H(\HG) &= \big\{h_1 = \{v_1,v_2,v_3,v_4\},h_2 = \{v_1,v_2,v_5\},h_3=\{v_1,v_6\} \big\}.
\end{align*}
The incidence, degree, and adjacency matrices and be written as

\begin{align*}
\hspace{-17pt}
    \I(\HG) = 
    \begin{bmatrix}
        1 & 1 & 1 \\
        1 & 1 & 0 \\
        1 & 0 & 0 \\
        1 & 0 & 0 \\
        0 & 1 & 0 \\
        0 & 0 & 1 \\
    \end{bmatrix}
    , \: 
    D(\HG) = 
    \begin{bmatrix}
        3 & 0 & 0 & 0 & 0 & 0 \\
        0 & 2 & 0 & 0 & 0 & 0 \\
        0 & 0 & 1 & 0 & 0 & 0 \\
        0 & 0 & 0 & 1 & 0 & 0 \\
        0 & 0 & 0 & 0 & 1 & 0 \\
        0 & 0 & 0 & 0 & 0 & 1 \\
    \end{bmatrix}
    ,\: \text{and }
    \A(\HG) = 
    \begin{bmatrix}
        0 & 2 & 1 & 1 & 1 & 1 \\
        2 & 0 & 1 & 1 & 1 & 0 \\
        1 & 1 & 0 & 1 & 0 & 0 \\
        1 & 1 & 1 & 0 & 0 & 0 \\
        1 & 1 & 0 & 0 & 0 & 0 \\
        1 & 0 & 0 & 0 & 0 & 0 \\
    \end{bmatrix}
    ,
\end{align*}
respectively.

\begin{figure}[!ht]
    \centering  \includegraphics{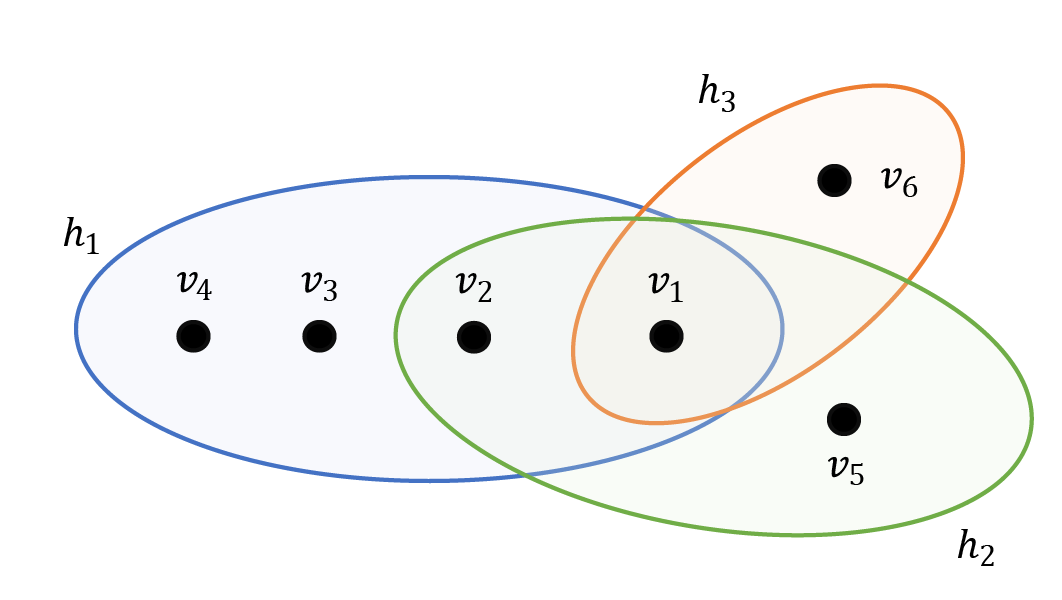}
    \caption{An example of a hypergraph. Each block dot represents a node and each colored elliptic indicates a hyperedge.} 
    \label{Fig:1}
\end{figure}

\section{The Hypergraph Echo State Network Model} \label{Chap:HESN}
\subsection{Ideas}

In the hypergraph domain, a node $v$ should not be directly influenced by the adjacent vertices but by \textit{its incident hyperedges}, and the contribution of a hyperedge $h \in \NeigH(v)$ stems from all nodes insides $h$ including or except $v$ (i.e., a vertex may influence itself or not). The nodes can contribute to the influence of a hyperedge in many different ways. One of the simplest ways is to assume that the contribution of nodes to a hyperedge is a function of \textit{the sum of nodes}.

Let's consider an hypergraph SIS model for example \cite{Higham2021}. A node $v$ is an individual and a hypergraph $h$ is a group of individuals like a class, an office room, etc; $v \in h$ if and only the individual $v$ belongs to the group $h$. During an epidemic of airborne disease like COVID-19, a healthy individual may experience a higher infection rate if more people in the group are infected (instead of a specific person in the group), and if the individual belongs to many groups at the same time \cite{Higham2021,Wilsonm2020}.

Moreover, in the ordinary GraphESN, each adjacent vertex influences $v$ equally.
However, in a real-world network, some of the adjacent vertices, say $v$ and $w$, may belong to multiple groups at the same time and hence, should share more information. In the SIS example, it means that the possibility of the node $w$ infecting $v$ is higher than that of other nodes. Such a feature is not taken into consideration in the original GraphESN. In our HypergraphESN, as the influence comes from the hyperedges and two vertices may belong to multiple hyperedges at the same time, informally speaking, the hypergraph model not only records the higher-order relationships but takes the \textit{closeness} of two vertices into consideration.

\subsection{Mathematical Description}

The architecture of the HypergraphESN consists of an input layer of $N_U$ units, a hidden layer of $N_R$ non-linear recursive units (the reservoir), and an output layer of $N_O$ units as the GraphESN does. The given data is of the form $\big\{\big( \HG_j,\bu(\HG_j)),\by_j \big) \big\}_{j = 1}^{N}$ where $\HG_j = \big(V(\HG_j),H(\HG_j)\big) \in (\R^{N_U})^\#$ is the hypergraph, $\bu(\HG_j)$ is the hypergraph-structured input vector, $\by_j \in \R^{N_O}$ is the (unstructured) target vector. We denote by $\bx(\HG_j) \in \R^{N_R \cdot \left\vert V(\HG_j )\right\vert}$ the (structured) internal state. For notational convenience, we drop the subscript $j$. The transductions connecting the core architectures in HypergraphESN and GraphESN are similar: the encoding function, the state mapping function, and the output function. However, the encoding processing in HypergraphESN is modified to handle the hypergraph structure. Here, information on vertices is aggregated along the incident hyperedges and subsequently fed back to the vertices themselves, which effectively divides the transition function into two steps. Therefore, in what follows, we focus on the encoding function. 

\subsubsection{Model 1}

As in the GraphESN, we start with a local transition function and then stack them together to obtain the global transition function. 
In Model 1, for each vertex $v$, the internal states on the \textit{open neighborhood} of its incident hyperedges are collected to compute the internal states on incident hyperedges, which are later used to update the internal state at $v$ in the iterative scheme.

Let $(\R^k)^\star$ denote the set of hyperedges with degree $k \ge 2$. For a vertex $v$, the function $\T_{k,v}: (\R^{k})^\star \to \R^{N_R}$ maps the internal states on an incident hyperedge to a hyperedge-based vector. 
For notational convenience, we symbolically write $\T = \{ \T_{k,v} \}$ and call them the aggregation functions, in agreement with the terminology in \cite{Yi2020}. 
In this article, we assume $\T_{k,v}$ to be of the form, for $h \in \NeigH(v)$,
\[ \T_{k,v}(h) \coloneqq g\big( \sum_{ w \in \Neig(v,h) } \bx(w) \big), \]
where $g$ is a Lipschitz function (also called the aggregation function by abuse of notation) independent of the input data, with Lipschitz constant $L_g$. By staking $\{ \T_{\deg (h),v} (h) \}_{h \in \NeigH(v)}$, we obtained a local hyperedge-based internal state $\bh(\Neig(v)) \in \R^{N_R \cdot \deg(v)}$.
The encoding function is defined vertex-wise by the local transition function $\tau_v$ by 
\begin{align} \label{Eq:LocTrans2}
    \bx(v) &= \tau_v \big( \bu(v), \bh(\Neig(v)) \big) \nonumber \\
    &= f\big(\Win_v \bu(v) + W_v \bh(\Neig(v))\big).
\end{align}

Staking all local transition functions, we see that the encoding function $\Trans_{enc}: \bu(\HG) \mapsto \bx(\HG)$ is determined by the global transition function $\tau$ which is the concatenation of the local transition functions,
\[ \bx(\HG) = \tau\big(\HG,\bu(\HG),\bx(\HG)\big). \]
To ensure the uniqueness and the existence of the solution, we would like to provide conditions so that the Banach Contraction Principle implies the convergence of the iterative scheme 
\[ \bx_t(\HG) = \tau\big(\HG,\bu(\HG),\bx_{t-1}(\HG)\big), \]
to the unique solution. In practice, the initial state is set to be the null state $\bx_{0}(\HG) = 0$ and the iteration ends whenever $\left\Vert 
\bx_{t} - \bx_{t-1} \right\Vert < \epsilon$ for some prescribed threshold $\epsilon$.

To derive conditions for contractivity, we assume the stationary assumption and find an explicit form of the local transition function $\tau_v$ whose $k$-th component is denoted by $\tau_{v,k}$. We refer to the subscript $\{ v, k \}$ as the graph and vectorial component of the global (or local) transition function. 
\begin{align*}
    \tau_{v,k} &= e^T_{v,k} f\big(\Win \bu(v) + W \sum_{h \in \NeigH(v)} \T_{\deg(h),v} (\bx(h))\big) \\
    &= e^T_{v,k} f\big(\Win \bu(v) + W \sum_{h \in \NeigH(v)} g(\sum_{ v \in \Neig(v,h)} \bx(w) )\big) \\
    &= e^T_{v,k} f\big(\Win \bu(v) + W \sum_{h \in \NeigH(v)} g(\sum_{ w \in V(\HG)} ( \I_{wh} - \I_{wh}\delta_{vw}) \bx(w) )\big) \\
    &= e^T_{v,k} f\big(\Win \bu(v) + W \sum_{h \in H(\HG)} \I_{vh} g(\sum_{ w \in V(\HG)} ( \I_{wh} - \I_{wh}\delta_{vw}) \bx(w) )\big).
\end{align*}
By $e_{v,k} \in \R^{\left\vert V(\HG) \right\vert \cdot N_R}$, we mean a column vector whose $\{v,k\}$-entry is $1$, and $0$ otherwise. If $g = \Id$, then by (\ref{Eq:IncAdjDeg}) we have
\begin{align*}
    \tau_{v,k} &= e^T_{v,k} f\big(\Win \bu(v) + W \sum_{ w \in V(\HG)} \A_{vw} \bx(w) \big),
\end{align*}
and the global transition function admits the explicit formula 
\begin{align} \label{Eq:GloMod1}
   \bx(\HG) &= \tau \big( \HG, \bu(\HG), \bx(\HG) \big) \nonumber \\
   &= f\big( (\Id \otimes \Win) \bu(\HG) + (\A(\HG) \otimes W) \bx(\HG) \big).
\end{align}
Therefore, by the same argument as in Proposition \ref{Prop:GraphESNContraction}, the global transition function $\tau$ is contractive if we have the following sufficient condition,
\begin{align} \label{Eq:Model1_Cond1}
    \big( \max \left\Vert \A(\HG) \right\Vert \big) \left\Vert W \right\Vert < 1,
\end{align}
where the maximum is taken over all external input data. 

When $g \ne \Id$, the analysis becomes more complicated as there seems to be no simple explicit formula for the global transition function. Nonetheless, under the assumptions of the input null assumption, i.e., $\bu(\HG) = 0$, and differentiability of $f$, we derive a necessary condition for the validity of the iterative scheme. As $\bx(\HG) = 0$ is an equilibrium, we compute the derivative of the global transition function $\tau$. See \cite{Gallicchio2017} for a similar treatment in the standard ESN case. For a hypergraph $\HG$, we have 
\begin{align*}
    \frac{\partial \tau_{v,k}}{\partial \bx(z)_j}|_{\bu = 0, \bx = 0} &= e^T_{v,k} f'(0) W \sum_{h \in H(\HG)} \I_{vh} g'(0) \sum_{ w \in V(\HG)} ( \I_{wh} - \I_{wh}\delta_{vw}) e_{z,j} \\
    &= e^T_{v,k} f'(0) g'(0) W  \sum_{ w \in V(\HG)} \A_{vw} e_{z,j} \\
    &= e^T_{v,k} f'(0) g'(0) W  \A_{vz} e_{z,j}. \\
\end{align*}
Hence, we have 
\[ \nabla \tau = f'(0) g'(0)\A(\HG) \otimes W. \]

In order to have the contraction property, the local stability condition
\[ \rho ( \nabla \tau ) \le \big( \max \rho ( \A(\HG) ) \big) \rho( W ) g'(0) < 1, \]
where $\rho$ denotes the spectral radius, must be satisfied \cite{Gallicchio2017}. If we further require both $f$ and $g$ to be non-negative, $f(0) = g(0) = 0$, and $g$ to be concave, then by repeating the argument in \cite{Higham2021}, the above condition becomes the global stability condition.

At this point, it is natural to ask for a sufficient condition for arbitrary input state $\bu$. Here, we shall give a sufficient condition for the contractivity, even though the bound is not optimal \cite{Gallicchio2010}.
By the direct computation, we have
\begin{align*}
    & \left\Vert \tau(\HG,\bu(\HG),\bx(\HG)) - \tau(\HG,\bu(\HG),\bx'(\HG)) \right\Vert  \\
    &\le \sum_{v \in V(\HG)} \left\Vert W \right\Vert \Bigg\Vert \sum_{h \in H(\HG)} \I_{vh} \Big( g\big(\sum_{ w \in V(\HG)} ( \I_{wh} - \I_{wh}\delta_{vw}) \bx(w) \big) \\
    & \indent \indent \indent \indent \indent \indent \indent \indent - g\big(\sum_{ w \in V(\HG)} ( \I_{wh} - \I_{wh}\delta_{vw}) \bx'(w) \big) \Big) \Bigg\Vert \\
    &\le \sum_{v \in V(\HG)} \left\Vert W \right\Vert \sum_{h \in H(\HG)} \I_{vh} \Bigg\Vert g\big(\sum_{ w \in V(\HG)} ( \I_{wh} - \I_{wh}\delta_{vw}) \bx(w) \big) \\
    & \indent \indent \indent \indent \indent \indent \indent \indent - g\big(\sum_{ w \in V(\HG)} ( \I_{wh} - \I_{wh}\delta_{vw}) \bx'(w) \big)  \Bigg\Vert \\
    &\le \sum_{v \in V(\HG)} \left\Vert W \right\Vert L_g \sum_{h \in H(\HG)} \I_{vh} \left\Vert \sum_{ w \in V(\HG)} ( \I_{wh} - \I_{wh}\delta_{vw}) \big(\bx(w) - \bx'(w)\big) \right\Vert \\
    &\le \sum_{v \in V(\HG)} \left\Vert W \right\Vert L_g \sum_{h \in H(\HG)} \I_{vh} \sum_{ w \in V(\HG)} ( \I_{wh} - \I_{wh}\delta_{vw}) \left\Vert \bx(\HG) - \bx'(\HG) \right\Vert \\
    &\le \sum_{v \in V(\HG)} \left\Vert W \right\Vert L_g \sum_{ w \in V(\HG)} \A_{vw} \left\Vert \bx(\HG) - \bx'(\HG) \right\Vert \\
    &\le \sum_{v,w \in V(\HG)} \A_{vw} \left\Vert W \right\Vert L_g \left\Vert \bx(\HG) - \bx'(\HG) \right\Vert. \\
\end{align*}
Hence, to ensure contractivity, we require 
\begin{align} \label{Eq:Model1_Cond3}
    \big( \max \sum_{v,w \in V(\HG)} \A_{vw} \big) \left\Vert W \right\Vert L_g < 1,
\end{align}
where the maximum is taken over all input hypergraphs in the data set. 

Finally, we remark that if $\HG$ is indeed a graph (i.e., the hypergraph with the constant hyperedge degree $2$) and $g = \Id$, then the HypergraphESN defined by (\ref{Eq:LocTrans2}) reduces to the GraphESN, the condition (\ref{Eq:Model1_Cond1}) coincides with that for GraphESN. The condition (\ref{Eq:Model1_Cond3}) is weaker than the one shown in \cite{Gallicchio2010}, but stronger than (\ref{Eq:Model1_Cond1}).

\subsubsection{Model 2} \label{Chap:Model2}

In Model $2$, we assume that in the aggregation function, a vertex contributes to its incident hyperedges, and thereby replace the \textit{open neighborhood} with \textit{closed neighborhood}. 
The architecture of the model is similar to the previous one with some modifications. The aggregation function $\T_{k,v}: (\R^k)^\star \to \R^{N_R}$ is modified by
\begin{align} \label{Eq:NonLinVerHypInt} 
    \T_{k,v}(h) \coloneqq g\big(\sum_{w \in \ExtNeig(v,h)} \bx(w) \big) = g\big(\sum_{w \in h} \bx(w) \big),
\end{align}
for all $h \in \NeigH(v)$. By staking $\{ \T_{\deg (h),v} (h) \}_{h \in \NeigH(v)}$, we obtain a local hyperedge-based internal state $\bh(\ExtNeig(v)) \in \R^{N_R \cdot \deg(v)}$. However, we notice that as $\T_{k,v}$ is now essentially independent of $v$, we can simply write $\T_k$ for $\T_{k,v}$, and instead of computing the hyperedge-based internal state for each vertex, we have a global hyperedge-based internal state $\bh(\HG) \in \R^{N_R \cdot \left\vert H(\HG) \right\vert}$ by applying $\T$ to each hyperedge. 
Consequently, we have $\bh(\ExtNeig(v)) = \bh(\NeigH(v))$.
The encoding function is again defined vertex-wise by the local transition function $\tau_v$ by 
\begin{align} \label{Eq:LocTrans1}
    \bx(v) &= \tau_v \big( \bu(v), \bh(\NeigH(v)) \big) \nonumber \\
    &= f\big(\Win_v \bu(v) + W_v \bh(\NeigH(v))\big).
\end{align} 

Under the stationary assumption, the global transition function admits the following form 
\begin{align} \label{Eq:Mod2Global}
    \tau(\HG,\bu(\HG),&\bx(\HG)) = \nonumber \\ 
    & f\Big( (\Id \otimes \Win) \bu(\HG) + (\I(\HG) \otimes W) g\big( (\I(\HG)^T \otimes \Id) \bx(\HG) \big) \Big).
\end{align} 
Therefore, if the condition
\[ ( \max\left\Vert \I(\HG) \right\Vert )^2 \left\Vert W \right\Vert L_g < 1 \]
is satisfied, then the transition function $\tau$ is contractive. In particular, if $g = \Id$, then the condition becomes
\[ ( \max\left\Vert \I(\HG)\I(\HG)^T \right\Vert ) \left\Vert W \right\Vert = ( \max\left\Vert \I(\HG) \right\Vert )^2 \left\Vert W \right\Vert < 1. \] 

We observe that in Model $2$, the explicit global formula (\ref{Eq:Mod2Global}) consistently holds for any aggregation function $g$, while Model $1$ requires $g$ to be trivial for Eq. (\ref{Eq:GloMod1}). In addition, under the trivial aggregation function condition, Model $1$ differs from Model $2$ by a degree matrix, and hence, so does GraphESN if $\HG$ is indeed a graph. 

Figure \ref{Fig:2} summarizes the architecture of the Model $2$ of HypergraphESN. The structured input data $\bu(\HG)$ is passed to the structured feature space through the encoding process. In each iteration $t$, the global hyperedge-based internal state $\bh_t(\HG)$ is computed by the non-linear aggregation function and then fed in the reservoir to update $\bx_t(\HG)$. After convergence, the representation $\barbx(\HG)$ of the internal state $\bx(\HG)$ is computed, followed by the readout process. Model $1$ has a similar architecture except that the hyperedge-based internal state should be computed locally.

Before proceeding to the next subsection, we comment on the architecture of the Model $2$ of HypergraphESN. A hypergraph $\HG$ can be represented by a bipartite graph called the incidence graph in which the vertex set is composed of the disjoint union of $V(\HG)$ and $H(\HG)$, and the pair $(v,h) \in V(\HG) \times H(\HG)$ is in the edge set if and only if $v \in h$. The structure of the incidence graph appears in each iteration (see Figure \ref{Fig:2}) and the function $\T$ maps the vectors labeled by one of the bipartitions in the incidence graph to the other. For a similar treatment, see \cite{Yi2020}. 

\begin{figure}[!ht]
    \centering  \includegraphics[width=1\textwidth]{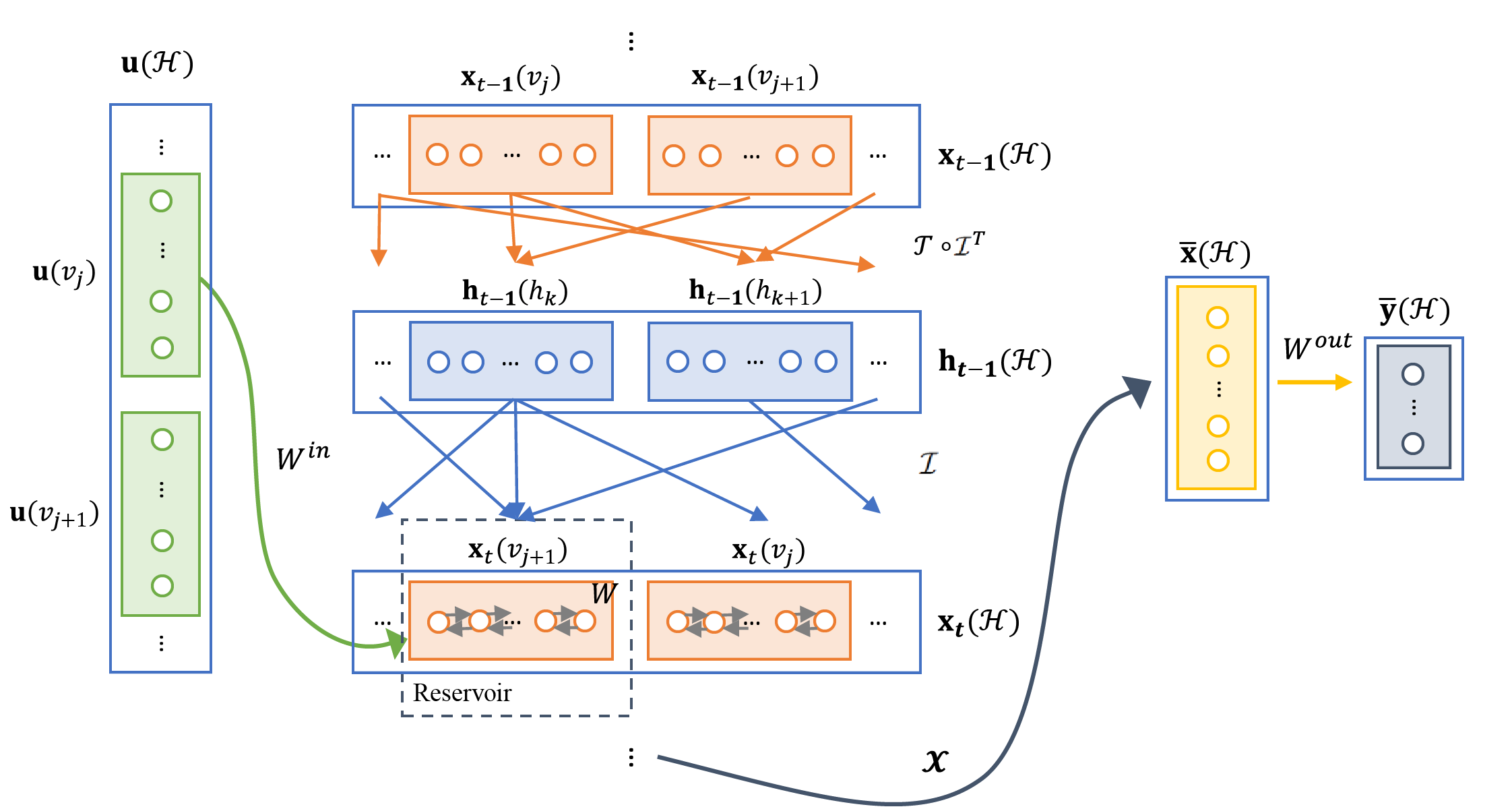}
    \caption{The architecture of HypergraphESN. The green, orange, and blue colored rectangles represent vectors labeled by vertices or hyperedges. The yellow and gray colored rectangles represent the vector representation of the inner state and the output state, respectively.}
    \label{Fig:2}
\end{figure}

\subsection{Computational Complexity}

For each hypergraph $\HG$, the input state $\bu(v)$ is encoded through the local transition function $\tau_v$ in Eq. (\ref{Eq:LocTrans2}) or (\ref{Eq:LocTrans1}). In either case, the computation of a hypergraph-based internal state $\bh(\Neig(v))$ or $\bh(\ExtNeig(v))$ requires $O(\deg(v) \cdot \deg(h) \cdot N_R)$ operations. The application of the matrix multiplication of the reservoir weight matrix $W$ on the hypergraph-based internal state requires $O(\deg(v) \cdot N_R^2)$ operations. If the connectivity of $W$ is further chosen such that each unit is connected to $M$ units on average, then the operations required to update the internal state $\bx_t(v)$ of a vertex $v$ is $O(M \cdot \deg(v) \cdot N_R + \deg(v) \cdot \deg(h) \cdot N_R )$. Therefore, the cost of an update of the global internal state $\bx_t(\HG)$ is
\[ O\Big( \left\vert V(\HG) \right\vert N_R \max\deg(v) \big( M + \max\deg(h) \big) \Big). \]
When the data has upper bounds on the vertex and hyperedge degrees, the computational cost grows linearly in the input network size (i.e., the number of nodes) and the reservoir size. 

\section{Numerical Experiments} \label{Chap:Experi}

We generate data in the SIS setting for classification tasks to analyze how different factors impact the accuracy of the HypergraphESN model, and compare it with the GraphESN.

\subsection{Data Generation}

The generation of the data set consists of $2$ steps. We first generate the hypergraph $\HG$ according to the prescribed hyperedge distribution (i.e., the distribution of hyperedge degrees). Then we implement the hypergraph SIS model (detail in \cite{Higham2021}) and random SIS model. The time series data of the SI labels on each node will be taken as the input vector $\bu(\HG)$.

\subsubsection{Hypergraph Generation}

The hypergraph is generated such that its hyperedge distribution follows a prescribed distribution. For later discussion of the impact of hypergraph structures on the accuracy performance, we consider $3$ types of hypergraph distributions as follows.
\begin{align*}
    p_1(x) = & \frac{1}{C_1} \big( 0.3 \cdot e^{-\frac{(x-2)^2}{3}} + 0.8 \cdot e^{-\frac{(x-5)^2}{5}} + 0.07 \cdot e^{-\frac{(x-30)^2}{3}} \big), \\
    p_2(x) = & \frac{1}{C_2} \big( 0.3 \cdot e^{-\frac{(x-2)^2}{3}} + 0.8 \cdot e^{-\frac{(x-5)^2}{5}} + 0.5 \cdot e^{-\frac{(x-30)^2}{3}} \big), \\
    p_3(x) = & \frac{1}{C_3} \big( 0.3 \cdot e^{-\frac{(x-2)^2}{3}} + 0.8 \cdot e^{-\frac{(x-5)^2}{5}} + 4.0 \cdot e^{-\frac{(x-30)^2}{3}} \big),
\end{align*}
where the domain of the distributions are $\{ 2,3,\dots,40 \}$ and $C_j$'s are normalization constants. The hyperedge distributions are referred to as More Small Hyperedges, Bimodal, More Large Hyperedges, respectively. See Figure \ref{Fig:3}. 

\subsubsection{Sample Data Generation}

Given a hypergraph $\HG$, we generate an SIS epidemic time series data $\{ X_v(t) \in \{0,1\} \}_{t \in \T_\textit{idx}, v \in V(\HG)}$ where $\T_\textit{idx}$ is the index set for time, and extract part of it as the input vector $\bu(\HG)$.

\textbf{Hypergraph SIS.} We implement the algorithm as in \cite{Higham2021} with the concave function $\arctan$, time-step $\Delta t = 0.01$, run-time $t_f = 30$, recovery rate $\delta = 1$, and different infection strength $\beta$. The run-time $t_f$ is chosen such that the later part of the time series data is independent of the initial condition. For each vertex $v$, we pick up last $10$ points (i.e., $N_U = 10$) from the time series data $\{ X_v(t) \}_{t \in \{0,\Delta t,\dots,t_f\}}$ as input data $\bu(v) \in \R^{10}$.

\textbf{Random SIS.} Each vertex does not interact with other vertices (i.e., the underlying structure is fully discrete). For each vertex $v$, the transition rate from susceptible to infected follows the Poisson process with parameter $\delta = 1$ and the opposite with parameter $\beta'$.

\subsection{Experiment Setup} \label{Chap:ExpSetup}

We consider two classes of tasks. Class A consists of $55 \%$ and $45 \%$ of samples whose input state $\bu$ records the time series data of hypergraph SIS process with $\beta = 1$ and random SIS with $\beta'$, respectively. Here, $\beta'$ is chosen such that the expected ratio of infected nodes is the same as that of the Hypergraph SIS process. Class B consists of $55 \%$ and $45 \%$ of samples whose input state $\bu$ records the time series data of hypergraph SIS process with $\beta = 0.085$ and with $\beta = 0.115$, respectively. For each class, we consider $3$ types of hyperedge distributions $p_j$. 
For each $p_j$, we generate $1000$ data sets, each hypergraph consists of $n$ nodes and $m$ hyperedges such that $<\deg(v)> = 3$. In short, from the viewpoint of population-level models, Class A can be understood as the same severity of the epidemic but on different types of structures, while Class B is the opposite.

In section \ref{Chap:HESN_vs_ESN}, we study how the understanding of higher-order interactions among nodes affects the accuracy performance by implementing ESNs with different underlying structures as input.
The reservoir size $N_R$ is contained in $\{2^k: k = 0,\dots, 6\}$ and the network size is set to be $80$. We (\romannum{1}) run the HypergraphESN with full hypergraph structures, (\romannum{2}) randomly replace $50\%$ of hyperedges with complete graphs, remove the identical edges, and then run the HypergraphESN. The resulting hypergraph is called a partial clique hypergraph $\HG_{50\%}$ of the hypergraph $\HG$ and its curve is labeled $50\%$ \textit{HypergraphESN}. In particular, we implement the Model $2$ version due to its compact form Eq. (\ref{Eq:Mod2Global}), with $f = \tanh$ and $g = \Id$. 
The input weight matrix $\Win$ is a random matrix of i.i.d. values sampled from standard uniform distribution, and the reservoir weight matrix $W$ is randomly generated from standard uniform distribution with $M = 5$ sparseness and then normalized such that $\left\Vert W \right\Vert = 0.9 \cdot ( \max \left\Vert \I(\HG) \right\Vert^{2} )^{-1}$. The output function is trained by the ridge regression with regularization $\lambda_\textit{ridge} \in \{ 0, 10^{-5}, 5\times 10^{-5},\dots,10^{-1}, 5\times 10^{-1} \}$ \cite{Micheli2022,Wang2022}. Moreover, we (\romannum{3}) replace hypergraphs $\HG$ with their the clique expansion $\G_\HG$ (i.e., replacing all hyperedges with edges) and run the GraphESN with the same setup as the HypergraphESN, except that the reservoir weight matrix $W$ now satisfies $\left\Vert W \right\Vert = 0.9 \cdot ( \max \left\Vert \A(\G_\HG) \right\Vert )^{-1}$. 
As the ESN models depend on the random weights initialization, we evaluate the accuracy of the classification tasks by conducting a bootstrap analysis as follows. We randomly draw (with replacement) $200$ samples out of the original $1000$ data sets, implement the ESN with a $90\%-10\%$ training/test split, and repeat this bootstrap re-sampling process $200$ times to estimate the accuracy.
Finally, as a reminder, we may also regard $\G_\HG$, $\HG_{50\%}$, and $\HG$ as poorly, partially, and fully identifying the higher-order interactions in a network.

In section \ref{Chap:NR_vs_n}, we study the dependence of the accuracy performance of Hypergraphs on reservoir size and network size. We apply the same setup as in section \ref{Chap:HESN_vs_ESN} but with varying reservoir size $N_R \in \{2^k: k = 0,\dots,9 \}$ and network sizes $n \in \{ 50, 60,\dots, 110 \}$. For the sake of computational resources, we only do the bootstrap re-sampling process $20$ times.

In section \ref{Chap:Nonlinearity}, we study the influence of the nonlinear vertex-hyperedge interaction on the accuracy. The same setup as in section \ref{Chap:HESN_vs_ESN} is applied except four different aggregation functions are considered: the identity function and functions of the type $g_{(a,b)}$, which are piecewise linear functions passing through the origin such that in the intervals $(-\infty,0]$, $[0,a)$, $[a,b)$, and $[b,\infty)$ the slopes are $0$, $0.1$, $1$, and $3$, respectively.

\subsection{Performance and Discussion}

\subsubsection{HypergraphESN vs GraphESN} \label{Chap:HESN_vs_ESN}

Figure \ref{Fig:3} illustrates how reservoir size $N_R$ and the understanding of hypergraph structure in a network contribute to increased classification accuracy. We observed two general trends: (\romannum{1}) the accuracy of HypergraphESN, $50 \%$ HypergraphESN, and GraphESN shows improvement as $N_R$ increases, and (\romannum{2}) accuracy tends to rise as more hyperedges are identified, provided that $N_R$ is sufficiently large. Furthermore, we note that even if the dynamics of epidemic spread on small and large hyperedges are different, the effectiveness of HypergraphESN is consistent.

The discrepancy in accuracy performance between Class A and Class B tasks stems from the fact that Class A tasks require a great number of units in the hidden layer to achieve higher accuracy, whereas Class B task accomplishes a decent accuracy with considerably fewer units but quickly approaches its limitation. We believe that since the classification of input states from infection strength $\beta = 0.085$ and $0.115$ can be achieved with just one or two features from the data (e.g., the mean ratio of infected nodes), the accuracy performance experiences a rapid increase even with a limited number of units in the hidden layer. However, due to the insufficient length of input states extracted from the time series data of the hypergraph SIS model, increasing the reservoir size further would not lead to an improvement in classification accuracy.

Moreover, in Class A classification tasks, compared to the GraphESN, when $50 \%$ of hyperedges are identified, the accuracy improves more notably if more giant components (i.e., large hyperedges) are presented in the underlying hypergraph structure. This phenomenon may be attributed to the nature of hypergraphs and SIS models. When the small hyperedges prevail, the non-linear concave arctangent function applied in the SIS model does not play a vital role. As a consequence, intuitively speaking, the SIS process on the partial clique hypergraph may lie in the middle of that on the hypergraph and the clique graph. 
However, the arctangent concavity influences the SIS process through the large hyperedges, and therefore, even if merely half of the hyperedges are identified, $50\%$ HypergraphESN may improve the accuracy performance more significantly. In Class B tasks, the accuracy of $50\%$ HypergraphESN is comparable to that of HypergraphESN, and accuracy improvements do not depend on hyperedge distributions. This may be ascribed to the fact that the Class B tasks can be distinguished with few features. 

\begin{figure}[ht]
    \centering  \includegraphics[width=1\textwidth]{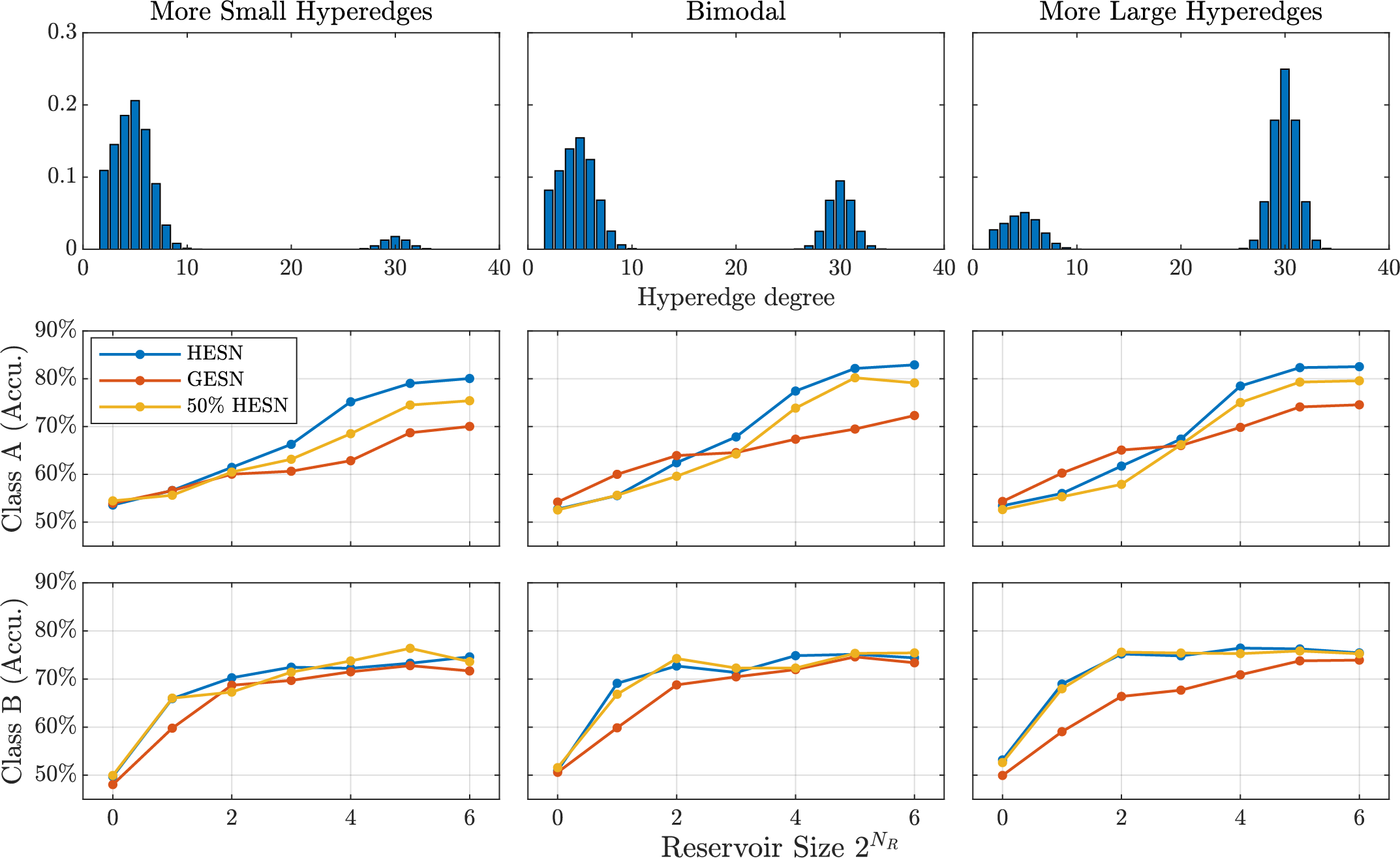}
    \caption{The accuracy performance under different hyperedge distributions. The sub-figures in the first row, from left to right, are the hyperedge distributions corresponding to \textit{More Small hyperedges}, \textit{Bimodal}, and \textit{More Large hyperedges}. Those in the second and the third rows, from left to right, show the mean accuracy performance of HypergraphESN (\textit{HESN}), $50\%$ HypergraphESN (\textit{50\% HESN}), and GraphESN (\textit{GESN}) for Class A and in Class B tasks, respectively. }
    \label{Fig:3}
\end{figure}

\subsubsection{Reservoir Size vs Network Size} \label{Chap:NR_vs_n}

Figure \ref{Fig:4} shows how network size affects the accuracy. In both Class A and Class B tasks, a higher accuracy performance can be achieved for larger networks provided that reservoir size is sufficiently large. This trend arises from the larger network's capacity to offer more information for HypergraphESN to effectively classify tasks. Moreover, we notice that HypergraphESN demonstrates a better performance for networks featuring more giant components. This phenomenon is again attributed to the nature of the hypergraph SIS process. 
Under the constant vertex degree condition, larger hyperedges not only enrich the connectivity of the network but also significantly change the dynamics of the epidemic through the arctangent concavity. Consequently, the networks with more giant components exhibit a more complex and diverse behavior compared to the networks on which small hyperedges dominate, and thereby provide HypergraphESN with richer information for the classification tasks.

\begin{figure}[ht]
    \centering  \includegraphics[width=1\textwidth]{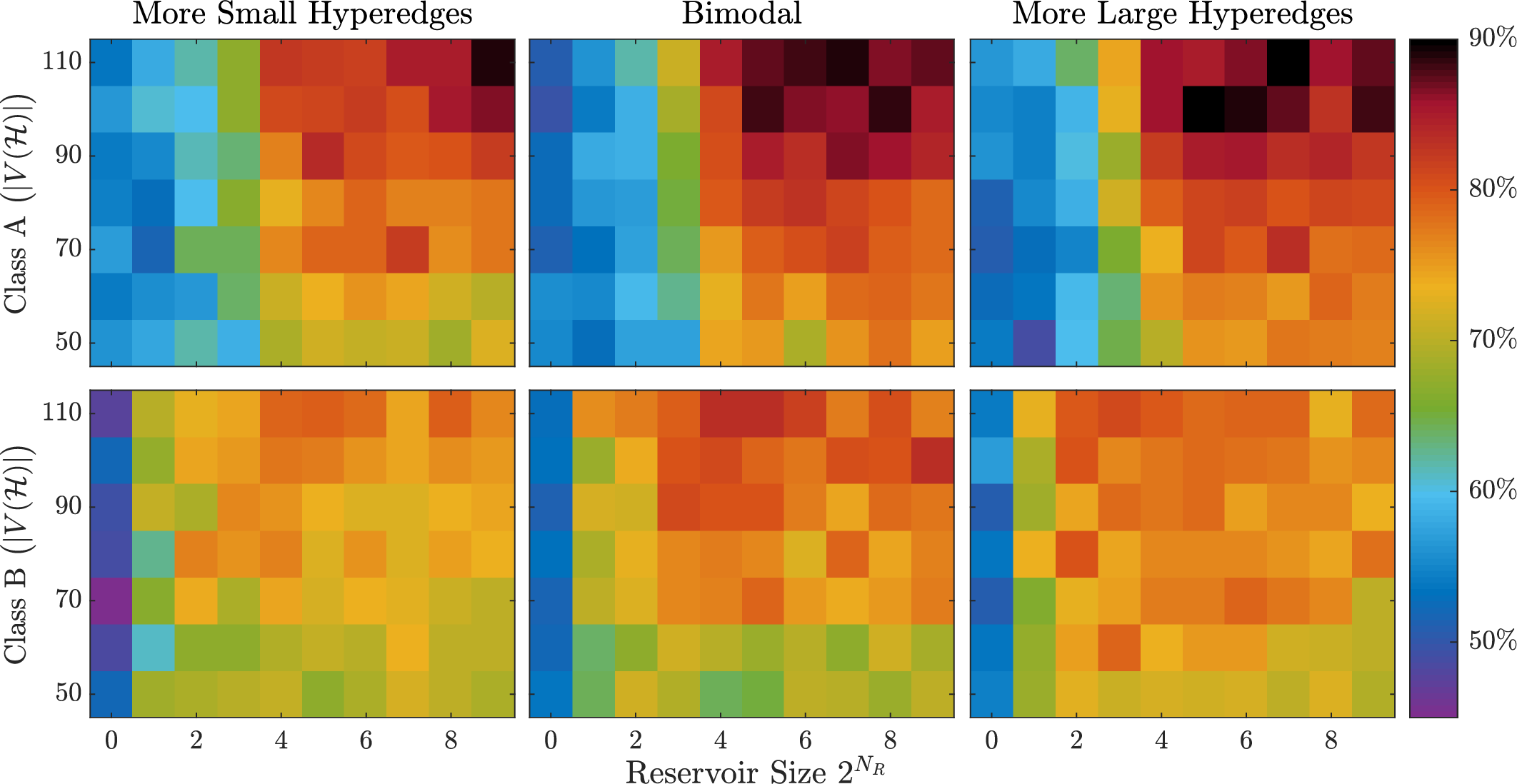}
    \caption{The accuracy performance of HypergraphESN. The $y$-axis is the network sizes and the colorbar indicates the mean accuracy.}
    \label{Fig:4}
\end{figure}

\subsubsection{Non-linearity of Vertex-hypergraph Interaction} \label{Chap:Nonlinearity}

Since the HypergraphESN allows the non-linearity vertex-hyperedge interaction through the aggregation function, we focus on the Bimodal case where small and large hyperedges compete. As mentioned in section \ref{Chap:ExpSetup}, we consider $\Id$, $g_{(3,6)}$, $g_{(7,10)}$, and $g_{(11,14)}$. For instance, 
\begin{align} \label{Eq:NonLinEx}
    g_{(3,6)}(x) = 
    \begin{cases}
        0, & x < 0 \\
        0.1 x, & 0 \le x < 3 \\
        x - 2.7, & 3 \le x < 6 \\
        3x - 14.7, & 10 \le x. \\
    \end{cases}
\end{align}
The essential idea behind the piecewise linear functions $g_{(a,b)}$ is to distinguish the influence of small and large hyperedges by an artificial threshold which may highly depend on the tasks at hand. In this study, as the epidemic tends to spread more efficiently through large hyperedges, the node $v$ belonging to a large hypergraph has a higher chance of being infected. Consequently, each entry in $\bx_1(v) = \tau_v(\bu(v))$ should be higher, and vice versa. 
Figure \ref{Fig:5} shows that the HypergraphESN with artificially-designed functions may experience a more rapid increase in accuracy as the reservoir increases, and show a comparable accuracy as HypergraphESN with the trivial aggregation function $\Id$, if the correct threshold to distinguish the hyperedges degrees is identified. This observation implies that with a carefully designed vertex-hypergraph interaction, the reservoir can yield sufficiently diverse dynamics, even with a smaller reservoir size, consistent with the remark made in \cite{GALLICCHIO2011}. In Class A tasks, Eq. (\ref{Eq:NonLinEx}) may correctly identify the small and large hyperedges in the reservoir computing while other piecewise linear functions do not. In Class B tasks, none of the piecewise linear functions improve the HypergraphESN performance notably compared to the identity map.

\begin{figure}[ht]
    \centering  \includegraphics[width=0.5\textwidth]{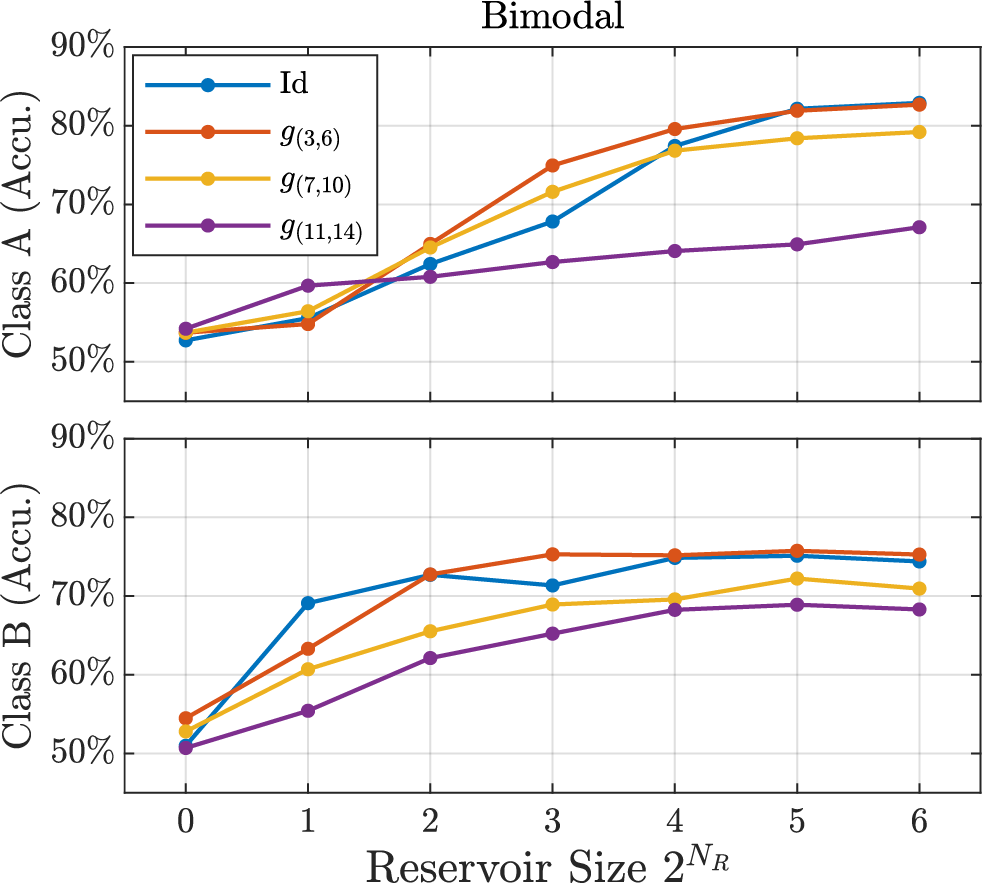}
    \caption{The accuracy performance of HypergraphESN with different choices of aggregation functions. The Bimodal hyperedge distribution is applied.}
    \label{Fig:5}
\end{figure}

\section{Conclusion}

We have introduced an extension of the GraphESN to the hypergraph domain, and studied the theoretical convergence conditions for the encoding process. As a variant of the ESN, HypergraphESN offers an efficient approach for processing hypergraph-structured data. Moreover, the computational complexity can be designed to exhibit linear growth in both the input network size and the reservoir size.

Our numerical experiments for binary classification have shown that for hypergraph-structured data, the HypergraphESN achieves accuracy levels comparable or even higher to those attained by GraphESN. The accuracy further improves with the identification of more higher-order relationships (i.e., hyperedges) in the network. 
Furthermore, it is worth noting that in the case where the optimal accuracy performance of HypergraphESN and GraphESN in Class B tasks are not significantly different, the HypergraphESN exhibits a more rapid learning capacity as the reservoir size increases. In addition, due to the distinguished dynamics of epidemics on giant components, the HypergraphESN exhibits better performance on networks with larger hyperedges. These findings suggest a promising potential for the HypergraphESN to enhance the performance of graph-based models on real-world data, both in terms of accuracy and computational efficiency, provided that a suitable hypergraph representation of the network can be identified.

We also considered the non-linear vertex-hyperedge interaction described by the aggregation function, which can be designed to introduce bias on a specific range of hypergraph degrees and thereby enrich the reservoir dynamics even with a limited number of hidden units.  
In this article, we applied the piecewise linear functions to artificially distinguish the small and large hyperedge in the reservoir, and revealed the potential to improve the learning capability of HypergraphESN though a suitable candidate of the aggregation function may highly depend on the tasks.

Overall, the HypergraphESN serves as a more practical and flexible tool to study structured data with higher-order relationships. Though the presented results are promising, more directions deserve to be explored in the future: the non-linearity interaction between vertices and hyperedges, a simpler architecture design, the trade-off between model performance and computational resources, the robustness against the real-world data, etc. We will examine HypergraphESN from those aspects in the subsequent works. 

\section*{Declarations of Competing Interests}
The author has no relevant financial or non-financial interests to disclose.

\section*{Declaration of Generative AI and AI-assisted Technologies in the Writing Process}
During the preparation of this work, the author used Chap-GPT 3.5 in order to improve the readability and language. After using this tool/service, the author reviewed and edited the content as needed and takes full responsibility for the content of the publication.

\section*{Acknowledgments}
The author would like to express his gratitude to Professor Ando H. (WPI-AIMR) for his suggestion on this article.

\addcontentsline{toc}{chapter}{Bibliography} 






\begin{thebibliography}{30}




\bibitem{Bandyopadhyay2020}
Bandyopadhyay S., Das K., \& Murty MN. 
Line hypergraph convolution network: Applying graph convolution for hypergraphs. 
\textit{arXiv preprint arXiv: \textsl{2002.03392}}. 

\bibitem{Bretto2013}
{Bretto, A.}, 
(2013). 
\textit{Hypergraph theory: An introduction.}, 
Mathematical Engineering, 
Springer Cham. \\ 
https://doi.org/10.1007/978-3-319-00080-0 


\bibitem{Chatzis2011}
Chatzis, SP. \& Demiris, Y. 
(2011). 
Echo State Gaussian Process. 
\textit{IEEE Transactions on Neural Networks}, 
\textsl{22}, 
1435--1445. \\ 
doi: 10.1109/TNN.2011.2162109 

\bibitem{CHUNG1997}
{Chung F.R.K.}, 
(1997). 
\textit{Spectral Graph Theory}, 
Conference Board of Mathematical Sciences, 
American Mathematical Society. 


\bibitem{Dai2009}
Dai, J., Venayagamoorthy, G. K. \& Harley, R. G. 
(2009). 
\textit{An Introduction to the Echo State Network and its Applications in Power System}. 
2009 15th International Conference on Intelligent System Applications to Power Systems, Curitiba, Brazil, 
pp. 1--7. \\ 
doi: 10.1109/ISAP.2009.5352913 


\bibitem{Feng2019}
Feng, Y., You, H., Zhang, Z., Ji, R. \& Gao, Y. 
(2019). 
Hypergraph Neural Networks. 
\textit{Proceedings of the AAAI Conference on Artificial Intelligence}, 
\textsl{33}, 
3558--3565. \\ 
https://doi.org/10.1609/aaai.v33i01.33013558 

\bibitem{Gallicchio2010}
{Gallicchio, C., \& Micheli, A.} 
(2010). 
\textit{Graph Echo State Networks}. 
The 2010 International Joint Conference on Neural Networks (IJCNN), Barcelona, Spain, 
pp. 1--8. \\ 
doi: 10.1109/IJCNN.2010.5596796 

\bibitem{GALLICCHIO2011}
Gallicchio, C., \& Micheli, A. 
(2011). 
Architectural and Markovian factors of echo state networks. 
\textit{Neural Networks}, 
\textsl{24}. \\ 
https://doi.org/10.1016/j.neunet.2011.02.002 

\bibitem{Gallicchio2017}
Gallicchio, C., \& Micheli, A. 
(2017). 
Echo State Property of Deep Reservoir Computing Networks. 
\textit{Cognitive Computation}, 
\textsl{9}, 
337--350. \\ 
https://doi.org/10.1007/s12559-017-9461-9 

\bibitem{GALLICCHIO2023}
Gallicchio, c. 
(2023). 
Euler State Networks: Non-dissipative Reservoir Computing. 
\textit{arXiv preprint arXiv: \textsl{2203.09382}}. \\ 


\bibitem{Higham2021}
Higham D. J. \& Henry-Louis d. K. 
(2021). 
Epidemics on hypergraphs: spectral thresholds for extinction. 
\textit{Processings of the Royal Society A}, 
\textsl{477}. \\ 
http://doi.org/10.1098/rspa.2021.0232 

\bibitem{Hirono2021}
Hirono, Y., Okada, T., Miyazaki, H., \& Hidaka, Y. 
(2021). 
Structural reduction of chemical reaction networks based on topology. 
\textit{Physical Review Research}, 
\textsl{3}, 
043123. 
https://link.aps.org/doi/10.1103/PhysRevResearch.3.043123 

\bibitem{Huang2019}
Huang, J., Cao, Y., Xiong, C., \& Zhang, H.T. 
(2019). 
An Echo State Gaussian Process-Based Nonlinear Model Predictive Control for Pneumatic Muscle Actuators. 
\textit{IEEE Transactions on Automation Science and Engineering}, 
\textsl{16}, 
1071--1084. \\ 
doi: 10.1109/TASE.2018.2867939 


\bibitem{Jaeger2001}
Jaeger, H. 
(2001). 
The “echo state” approach to analysing and training recurrent neural networks-with an erratum note. 
\textit{Bonn, Germany: German National Research Center for Information Technology GMD Technical Report}, 
\textsl{148}. \\ 
https://api.semanticscholar.org/CorpusID:15467150 

\bibitem{JAEGER2007}
Jaeger, H., Lukoševičius, M., Popovici, D., \& Siewert, U. 
(2007). 
Optimization and applications of echo state networks with leaky-integrator neurons. 
\textit{Neural Networks}, 
\textsl{20}, 
335--352. \\ 
https://doi.org/10.1016/j.neunet.2007.04.016 

\bibitem{Jia2021}
Jia, R., Zhou, X., Dong, L., \& Pan, S. 
(2021). 
\textit{Hypergraph Convolutional Network for Group Recommendation}. 
2021 IEEE International Conference on Data Mining (ICDM), Auckland, New Zealand, 2021,  
pp. 260--269. \\ 
doi: 10.1109/ICDM51629.2021.00036 

\bibitem{Jiang2019}
Jiang, J. W., Wei, Y. X., Feng, Y. F., Cao, J. X. \& Gao, Y. 
(2019). 
\textit{Dynamic Hypergraph Neural Networks}. 
Proceedings of the Twenty-Eighth International Joint Conference on Artificial Intelligence Main track (IJCAI-19), Macao, China, pp. 2635--2641. \\ 
https://doi.org/10.24963/ijcai.2019/366 


\bibitem{Konstantinova2001}
Konstantinova E. V., Skorobogatov, V. A. 
(2001). 
Application of hypergraph theory in chemistry. 
\textit{Discrete Mathematics}, 
\textsl{235}, 
365--383. \\ 
https://doi.org/10.1016/S0012-365X(00)00290-9. 


\bibitem{Lun2019}
Lun, S. X., Hu, H. F., Yao, X. S. 
(2019). 
The modified sufficient conditions for echo state property and parameter optimization of leaky integrator echo state network. 
\textit{Applied Soft Computing}, 
\textsl{77}, 
750--760. \\ 
https://doi.org/10.1016/j.asoc.2019.02.005 


\bibitem{Micheli2022}
Micheli, A., Tortorella D. 
(2022). 
Discrete-time dynamic graph echo state networks. 
\textit{Neurocomputing}, 
\textsl{496}, 
85--95. \\ 
https://doi.org/10.1016/j.neucom.2022.05.001. 

\bibitem{Mulas2022}
Mulas, R., Horak, D., \& Jost, J. 
(2022). 
Graphs, Simplicial Complexes and Hypergraphs: Spectral Theory and Topology. In Battiston, F., \& Petri, G. (Eds.), \textit{Higher-Order Systems}, (pp. 1--58). \\
https://doi.org/10.1007/978-3-030-91374-8\-1 


\bibitem{Nowzari2016}
Nowzari, C., Preciado, V. M., \& Pappas, G. J. 
(2016). 
Analysis and Control of Epidemics: A Survey of Spreading Processes on Complex Networks. 
\textit{IEEE Control Systems Magazine}, 
\textsl{36}, 
26--46. 
doi: 10.1109/MCS.2015.2495000 

\bibitem{Reiser2022}
Reiser, P., Neubert, M., Eberhard, A. et al. 
(2022). 
Graph neural networks for materials science and chemistry. 
\textit{Communications Materials}, 
\textsl{3}, 
Article 93. \\ 
https://doi.org/10.1038/s43246-022-00315-6 


\bibitem{Wang2022}
Wang, L., Zhao, J. \& Zhang, Y. X. 
(2022). 
\textit{Ridge-regression Echo State Network for Effluent Ammonia Nitrogen Prediction}.
2022 China Automation Congress (CAC), Xiamen, China, pp. 6509--6512. \\
doi: 10.1109/CAC57257.2022.10055863 

\bibitem{Wilsonm2020}
Wilson, N., Corbett, S., \& Tovey, E. 
(2020). 
Airborne transmission of covid-19. 
\textit{British Medical Journal}, 
\textsl{370}. \\ 
https://www.bmj.com/content/370/bmj.m3206 


\bibitem{Yadati2019}
Yadati, N., Nimishakavi, M., Yadav, P., Nitin, V., Louis, A., \& Talukdar, P. 
(2019). 
HyperGCN: A new method for training graph convolutional networks on hypergraphs. 
\textit{Proceedings of the 33rd International Conference on Neural Information Processing Systems}, 
Article 135, 
1511--1522. 


\bibitem{Yi2020}
Yi, J., \& Park, J. 
(2020). 
\textit{Hypergraph Convolutional Recurrent Neural Network}. 
Proceedings of the 26th ACM SIGKDD International Conference on Knowledge Discovery \& Data Mining, New York, USA, pp. 3366--3376. \\ 
https://doi.org/10.1145/3394486.3403389 

\bibitem{Yu2019}
Yu, Y., Si, X., Hu, C., Zhang, J. 
(2019). 
A Review of Recurrent Neural Networks: LSTM Cells and Network Architectures. 
\textit{Neural Computation}, 
\textsl{31}, 
1235--1270. \\ 
https://doi.org/10.1162/neco\_a\_01199 


\bibitem{Zhang2016}
Zhang, Z. K., Liu, C., Zhan, X. X., Lu, X., Zhang, C. X., Zhang, Y. C. 
(2016). 
Dynamics of Information Diffusion and its Applications on Complex Networks. 
\textit{Physics Reports}, 
\textsl{651}, 
1--34. \\ 
https://doi.org/10.1016/j.physrep.2016.07.002 

\bibitem{Zhou2007}
Zhou, D. Y., Huang, J. Y., \& Sch\"{o}lkopf, B. 
(2007). 
Learning with Hypergraphs: Clustering, Classification, and Embedding. 
In Sch{\"o}lkopf, B., Platt, J., \& Hofmann, T. (Eds.), \textit{Advances in Neural Information Processing Systems \textsl{19}: Proceedings of the \textsl{2006} conference}, MIT Press. \\ 
https://doi.org/10.7551/mitpress/7503.003.0205 

\end{thebibliography}



\end{document}